\title[The Sample Complexity of Multi-Distribution Learning]{Open Problem: The Sample Complexity of Multi-Distribution Learning for VC Classes}
\newcommand{\rloss}{q}
\newcommand{\ncost}{\hat{\cost}}
\newtheorem{problem}{Problem}
\newtheorem{fact}[theorem]{Fact}
\renewcommand{\epsilon}{\varepsilon}
\renewcommand{\hat}{\widehat}
\renewcommand{\tilde}{\widetilde}
\newcommand{\red}[1]{{\color{red}#1}}
\newcommand{\para}[1]{\left(#1\right)}
\newcommand{\paraflat}[1]{(#1)}
\newcommand{\parantheses}[1]{\left(#1\right)}
\newcommand{\curlybrackets}[1]{\left\{#1\right\}}
\newcommand{\bset}[1]{\curlybrackets{#1}}
\newcommand{\bsetflat}[1]{\{#1\}}
\newcommand{\abs}[1]{\left|#1\right|}
\newcommand{\setsize}[1]{\left| #1 \right|}
\DeclareMathOperator*{\Exp}{\mathbb{E}}
\newcommand{\EEs}[2]{\Exp_{#1}\left[#2\right]}
\newcommand{\EEsc}[3]{\Exp_{#1}\left[#2 \mid #3\right]}
\renewcommand{\cite}[1]{\citep{#1}}
\newcommand{\ceil}[1]{\left\lceil#1\right\rceil}
\newcommand{\bigO}[1]{O\parantheses{#1}}
\newcommand{\bigOtildesmol}[1]{\tilde{O}(#1)}
\newcommand{\reals}{\mathbb{R}}
\newcommand{\integers}{\mathbb{Z}}
\newcommand{\naturals}{\mathbb{N}}
\newcommand{\simplex}{\Delta}
\newcommand{\simiid}{\mathrel{\stackrel{\makebox[0pt]{\mbox{\normalfont\tiny i.i.d.}}}{\sim}}}
\DeclareMathOperator*{\argmax}{arg\,max}
\DeclareMathOperator*{\argmin}{arg\,min}
\newcommand{\assignequals}{\coloneqq}
\newcommand{\asseq}{\coloneqq}
\newcommand{\features}{\mathcal{X}}
\newcommand{\labels}{\mathcal{Y}}
\newcommand{\oracle}{\text{EX}}
\newcommand{\vcd}{d}
\newcommand{\err}{\cL}
\newcommand{\risk}{\cL}
\newcommand{\loss}{\ell}
\newcommand{\opt}{\mathrm{OPT}}
\newcommand{\regret}{\mathrm{Reg}}
\newcommand{\hyp}{h}
\newcommand{\rhyp}{p}
\newcommand{\hypothesisspace}{\cH}
\newcommand{\hyps}{\cH}
\newcommand{\dist}{D}
\newcommand{\distset}{\cD}
\newcommand{\dists}{\cD}
\newcommand{\distributionspace}{\cD}
\newcommand{\tsv}[2]{#1\vphantom{#1}^{\parantheses{#2}}}
\newcommand{\cost}{{c}}
\newcommand{\action}{a}
\newcommand{\actionset}{A}
\newcommand{\actions}{A}
\newcommand{\cA}{\mathcal{A}}
\newcommand{\cD}{\mathcal{D}}
\newcommand{\cH}{\mathcal{H}}
\newcommand{\cL}{\mathcal{L}}
\newcommand{\cP}{\mathcal{P}}
\newcommand{\cZ}{\mathcal{Z}}
\newcommand{\bX}{{\mathbf{X}}}
\begin{document}

\maketitle

\begin{abstract}
	Multi-distribution learning is a natural generalization of PAC learning to settings with multiple data distributions.
	There remains a significant gap between the known upper and lower bounds for PAC-learnable classes.
	In particular, though we understand the sample complexity of learning a VC dimension $d$ class on $k$ distributions to be $O(\epsilon^{-2} \ln(k) (d + k) + \min \bset{\epsilon^{-1} d k,  \epsilon^{-4} \ln(k) d })$, the best lower bound is ${\Omega}(\epsilon^{-2}(d + k \ln(k)))$.
	We discuss recent progress on this problem and some hurdles that are fundamental to the use of game dynamics in statistical learning. 
	
\end{abstract}

\begin{keywords}
	PAC learning, multi-distribution learning, distributional robustness, learning in games.
\end{keywords}

\section{Introduction}
The pervasive need for robustness, fairness, and multi-agent welfare in  learning processes has led to the development of learning paradigms whose performance hold under multiple distributions and scenarios.
\emph{Multi-distribution learning}, or MDL, is a setting introduced by~\cite{haghtalabOnDemandSamplingLearning2022} to address these needs and unify several existing frameworks and applications, such as notions of \emph{min-max} fairness \cite{mohri_agnostic_2019,Abernethy2022}, \emph{group distributionally robust} optimization \cite{sagawa_distributionally_2020}, and collaborative learning \cite{blumCollaborativePACLearning2017}.
MDL is a generalization of the agnostic learning paradigms~\citep{valiant_theory_1984,blumer1989learnability} to multiple data distributions. In this setting, given a set of distributions $\dists = \bset{\dist_1, \dots, \dist_k}$ supported on $\features \times \labels$, loss function $\loss$, and a hypothesis class $\hyps$, %
the goal of MDL is to find a (possibly randomized) hypothesis $\hyp$ where
\begin{align}
	\label{eq:optimal}
	\smash{\max_{\dist \in \dists} \risk_{\dist}(\hyp) \leq \epsilon + \min_{\hyp^* \in \hyps} \max_{\dist \in \dists} \risk_{\dist}(\hyp^*),\; \text{where}\; \risk_\dist(\hyp) \asseq \EEs{(x,y) \sim \dist}{\loss(\hyp, (x, y))}.}
\end{align}
Such an $\hyp$ is called an \emph{$\epsilon$-optimal solution} to the MDL problem $(\dists, \hyps)$ and we denote
$\opt \asseq \min_{\hyp^* \in \hyps} \max_{\dist \in \dists} \risk_{\dist}(\hyp^*)$.
Our open problem concerns the sample complexity of MDL.

\paragraph{Problem Statement.}
Consider an example oracle 
$\oracle_i$ for each distribution $\dist_i \in \dists$, which once queried returns an independent sample $(x,y)\sim \dist_i$.
The optimal sample complexity of MDL is the smallest total number of queries issued to examples oracles, in a possibly adaptive fashion, that is sufficient for learning an $\epsilon$-optimal solution.
Formally, 
a multi-distribution learning algorithm at each iteration $t = 1,2, \dots$, chooses an index $\tsv{i}{t} \in [k]$, queries $\oracle_{\tsv{i}{t}}$ to sample an instance $(\tsv{x}{t}, \tsv{y}{t})$ and, upon termination, returns a (possibly randomized) solution $\hyp$.
We use the shorthands $\tsv{z}{t} = (\tsv{x}{t}, \tsv{y}{t}, \tsv{i}{t})$, $\cZ = \features \times \labels \times [k]$, and $\cZ^*$ to denote a sequence $\tsv{z}{1},\tsv{z}{2}, \dots$ of any size.
\begin{definition}[Multi-Distribution Learnability]
We say a hypothesis class $\hyps$ is multi-distribution learnable with sample complexity $m_\hyps: (0, 1)^2 \times \naturals \to \naturals$ if there exists functions $\cA_s: \cZ^* \to [k]$ and $\cA_\hyp: \cZ^* \to \simplex(\labels)^\features$ where the following holds: for every $(\epsilon, \delta) \in (0, 1)$, $k \in \naturals$, and set of $k$ distributions $\dists$ over $\features \times \labels$, by letting $\tsv{i}{t} = \cA_s(\tsv{z}{1}, \dots, \tsv{z}{t-1})$ for $t \in [m_\hyps(\epsilon, \delta, k)]$, with probability at least $1 - \delta$, the solution $\hyp = \cA_\hyp(\tsv{z}{1}, \dots, \tsv{z}{m})$ is $\epsilon$-optimal, i.e., satisfying \eqref{eq:optimal}.
\end{definition}
\begin{problem}
\label{prob:main}
What is the optimal sample complexity of MDL? 
Are hypothesis classes  $\hyps$ with VC dimension $d$  multi-distribution learnable with a sample complexity of $O\left( \epsilon^{-2}(\ln(k) d + k \ln(k / \delta)\right))$?

\end{problem}

Recalling that the sample complexity of agnostic learning is $m_\hyps(\epsilon, \delta, 1) \in \Theta(\epsilon^{-2}(d + \ln(1/ \delta)))$ \cite{shai}, one hopes to avoid paying the $\Omega\paraflat{k \cdot m_\hyps(\epsilon, \delta/k, 1)}$ samples necessary to independently learn each of the $k$ data distributions. This is why our conjectured sample complexity avoids a dependence on $dk$ and has an optimal $\epsilon^{-2}$ dependence.
Existing results, however, have fallen short of meeting both of these requirements and traded off lack of dependence on $dk$ with the optimal dependence on $\epsilon$, as shown in rows 1 and 2 of Table~\ref{tab:bounds}.
On the other hand, the optimal sample complexity of MDL has been rightly characterized for finite hypothesis classes in row 3 (and more generally those of finite Littlestone dimension or Bregman diameter~\citep{haghtalabOnDemandSamplingLearning2022}) and obtains optimal $\epsilon^{-2}\ln(|\hyps|)$ dependence.
The best lower bound, row 4, leaves a logarithmic gap with the conjectured upper bound.
Near-optimal bounds are known for \emph{realizable} settings where $\opt \mkern-4mu = \mkern-4mu 0$ (row 5) and \emph{personalized} settings where one can produce a different hypothesis for each distribution (row 6).
\begin{table}[htbp]
	\centering
	\caption{Best known bounds on the sample complexity of MDL for hypothesis classes with VC dimension $d$. $\tilde{O}$ hides double-log factors and an additive factor of $\epsilon^{-2} k \ln(k/\delta)$.}
	\label{tab:bounds}
	\begin{tabular}{llll}
		\toprule
		&\textbf{Bound}                                                                                           & \textbf{Assumption} & \textbf{Citation}                                 \\
		\midrule
		1.&$\tilde{O}(\epsilon^{-2} \ln(k) d 
 + \epsilon^{-1} \red{dk} \log(d/\epsilon)) $ & N/A                 & \cite{haghtalabOnDemandSamplingLearning2022}     \\
		2.&$\tilde{O}(\red{\epsilon^{-4}} \ln(k)(d + \ln(1/\delta\epsilon))$             & N/A                 & (Theorem~\ref{theorem:diff})                        \\
		3.&$\tilde{O}(\epsilon^{-2}\ln(\red{\setsize{\hyps}}))$                                               & N/A                 & \cite{haghtalabOnDemandSamplingLearning2022}     \\
		4.&$\Omega(\epsilon^{-2}(d + k \ln(\min\bset{d, k}/\delta)))$                                               & N/A                 & \cite{haghtalabOnDemandSamplingLearning2022}     \\
		\midrule
		5.&	${O}(\ln(k) \epsilon^{-1} (d \ln(1/\epsilon)  + k  \ln(k / \delta) ))$ & $\opt = 0$          & \cite{chenTightBoundsCollaborative2018,nguyenImprovedAlgorithmsCollaborative2018} \\
	6.&	$\tilde{O}(\ln(k) \epsilon^{-2} (d \ln(d/\epsilon)  + k  \ln(k / \delta) ))$                              & Personalized       & (Theorem~\ref{theorem:personal})                        \\
		\bottomrule
	\end{tabular}
\end{table}

\paragraph{Broad Applications.}
One of the motivating application of MDL is \emph{collaborative learning}, where multiple stakeholders (representing $\dist_i$) collaborate in training a model that provides high performance for each stakeholder~\cite{blumCollaborativePACLearning2017,nguyenImprovedAlgorithmsCollaborative2018,chenTightBoundsCollaborative2018,blum_one_2021}.
The sample complexity of MDL thus quantifies the value of collaboration in learning: whereas our conjectured upper bound would imply that collaboration reduces the amount of data needed by a $\ln(k) / k$ factor, existing bounds only imply a $\min \bset{\ln(k) / k \epsilon^2, \epsilon}$ factor reduction.

Another application of MDL is to Group \emph{distributionally robust optimization} (DRO) which concerns learning a model with performance guarantees for  many deployment environments \cite{sagawa_distributionally_2020,sagawa_investigation_2020}.
MDL sample complexity bounds quantify the cost of obtaining this robustness, a question of growing interest and which has been studied in terms of finite-sum convergence \cite{carmonhausler,asi2021} and sample complexity \cite{haghtalabOnDemandSamplingLearning2022}.
Our conjectured upper bound would extend these favorable results to VC classes by only increasing the sample complexity logarithmically.

MDL also captures notions of min-max fairness in learning, which concerns prioritizing the well-being of the worst-off subgroup and has applications in federated learning \cite{mohri_agnostic_2019} and equity \cite{Abernethy2022}.
Min-max fair learning has mainly been studied in settings with presampled datasets, where an inevitable sample complexity lower bound of $\Omega(dk/\epsilon^2)$ arises as one cannot adaptively choose distributions to sample from.
The sample complexity of MDL thus captures how min-max fairness can be attained at less cost by adapting one's data collection strategy on the fly.

\section{Overview of Current Approaches}
Multi-distribution learning can be formulated as the zero-sum game between a ``learner'' who chooses hypotheses $\hyp \in \hyps$ and an ``adversary'' whose chooses indices $i \in [k]$, with the payoff function $\risk_{D_i}(h)$.
Importantly, for any mixed-strategy $\epsilon$-min-max equilibrium $(\rhyp, \rloss) \in \simplex(\hyps) \times \simplex_k$, the randomized map $\rhyp$ is a  $2 \epsilon$-optimal solution.
All existing multi-distribution learning algorithms can be expressed as finding a $\epsilon$-equilibrium using no-regret dynamics (see \cite{haghtalabOnDemandSamplingLearning2022} for an overview).

\paragraph{Game dynamics.}
Formally, a game dynamic is a $T$-iteration process where, at each $t \in [T]$, a learner chooses hypothesis $\smash{\tsv{\hyp}{t} \in \hyps}$ with a no-regret algorithm and an adversary chooses a distribution $\smash{\tsv{i}{t} \in [k]}$ with a (semi-)bandit algorithm.
The learner estimates its current cost function $\smash{\hyp \mapsto \risk_{\dist_{\tsv{i}{t}}}(\hyp)}$ by sampling $N_{\mathrm{learn}}$ datapoints from $\smash{\oracle_{\tsv{i}{t}}}$, while the adversary estimates its cost function $\smash{i \mapsto - \risk_{\dist_{i}}(\tsv{\hyp}{t})}$ by, for $N_{\mathrm{adv}}$ choices of $i \in [k]$, sampling a datapoint from each $\oracle_i$.
The random mapping $\rhyp$ where $\rhyp(x) = \text{Uniform}(\tsv{\hyp}{1}(x), \dots, \tsv{\hyp}{t}(x))$ is a $2 \epsilon$-optimal solution.

\paragraph{Different instantiations.}
Every result in Table~\ref{tab:bounds} can be obtained by instantiating this game dynamics template.
Row 3 can be obtained 
by setting $N_{\mathrm{learn}}=N_{\mathrm{adv}}=1$, $T \propto \epsilon^{-2}(\ln(\setsize{\hyps}) + k \ln(k/\delta))$, having the learner choose $\tsv{\hyp}{t}$ with Hedge and the adversary choose $\tsv{i}{t}$ with Exp3 \cite{haghtalabOnDemandSamplingLearning2022}.
Row 1 can be obtained with the same algorithm but first creating an offline $\epsilon$-covering 
 of the class $\hyps$ on each data distribution $D_i \in \dists$, using $O(d/\epsilon)$ samples per distribution.
Row 2
can be obtained by setting $N_{\mathrm{adv}}=k$, $N_{\mathrm{learn}}\propto \epsilon^{-2}(d + \ln(1/\delta\epsilon))$, $T \propto \epsilon^{-2} \ln(k/\delta)$, having the learner choose $\tsv{\hyp}{t}$ to be the (approximate) risk minimizer of the current cost function and the adversary choose $\tsv{i}{t}$ with Hedge (Theorem~\ref{theorem:diff});
in contrast to the prior upper bound, this  bound uses an algorithm that iterates fewer times but samples more at each iteration.

\paragraph{Personalization.}
We can pinpoint the challenge of negotiating trade-offs between different data distributions as the primary difficulty of handling infinite classes.
Consider the personalized setting where, during inference time, $\cA_\hyp(\tsv{z}{1}, \dots, \tsv{z}{m})$ can return a different hypothesis $h_i$ for each distribution $\dist_i$.
This assumes away the difficulty of combining hypotheses that are each near-optimal for different distributions.
The conjectured sample complexity bound of $\tilde{O}(\ln(k) \epsilon^{-2} (d \ln(d/\epsilon)  + k  \ln(k / \delta) ))$ can be obtained in the personalized setting (Row 6 of Table~\ref{tab:bounds}) by running the Row 1 algorithm $\ln(k)$ times, at each round limiting the adversary to playing within a small region of the simplex $\Delta_k$ that we can efficiently cover $\hyps$ on (Theorem~\ref{theorem:personal}).

\subsection{Existing Challenges}
\paragraph{Adaptive coverings.}
A potential approach to closing the gap with the conjectured sample complexity bound is to find a method of adaptively covering the hypothesis class $\hyps$.
Whereas Row 1 was obtained by taking a naive offline $\epsilon$-covering of $\hyps$ on all $k$ distributions, Row 2 was obtained by an algorithm that (implicitly) $\epsilon$-covers the class $\hyps$ on $O(\ln(k)\epsilon^{-2})$ adaptive choices of $D_i \in \dists$.
It is unclear whether a covering of lower resolution can be used, or if it is possible to only cover $\hyps$ on $O(\ln(k))$ choices of distributions $D_i \in \dists$.
 We also note that it is not the size of the $\epsilon$-covering of $k$ distributions, i.e., $k{\epsilon}^{-O(d)}$, that is the bottleneck, but rather the number of samples needed %
 to create such a cover. %
In contrast, the personalized algorithm decided in an online fashion what distributions need to be covered and it only covers $\hyps$ on $O(\ln(k))$ choice of (mixture) distributions from $\dists$.

\paragraph{Agnostic-to-realizable.}
Another potential tool is an agnostic-to-realizable reduction \cite{hopkins22}, since nearly-optimal sample complexity bounds are known for realizable settings where $\opt = 0$ \cite{blumCollaborativePACLearning2017,chenTightBoundsCollaborative2018,nguyenImprovedAlgorithmsCollaborative2018}.
This technique has had success in related problems, such as the closely related adversarial PAC learning problem \cite{montasserVC2019}.
Unfortunately, because multi-distribution learning involves online decision-making---determining which example oracles to call---the usual reduction of testing all possible labelings of observed datapoints is intractable.

\paragraph{Bounding regret.}
Game dynamics algorithms rely on the learner achieving a low regret on the sequence of distributions chosen by the adversary.
However, with VC classes,  even when all distributions share a Bayes classifier, an oblivious adversary can force the learner to suffer regret linear in $k$.
It is therefore necessary to reason about the adversary's behavior to bound the regret of the learner.
This is atypical; game dynamics proofs usually bound each player's regret independently.
\begin{restatable}{proposition}{difficult}
	\label{proposition:difficult}
	Consider an algorithm $\cA$ that, given distributions $D_1, \dots, D_T$, draws only $N$ datapoints in total and returns a sequence of hypotheses $\hyp_1, \dots, \hyp_k$ where each $\hyp_t$ is trained only on datapoints sampled from $D_1, \dots, D_t$.
	There exists a sequence $D_1, \dots, D_T$ with only $k$ distinct members, where $\smash{\mathbb{E}[T^{-1} \sum_{t \in [T]} \err_{D_{t}}(\hyp_t)] - \min_{h^* \in \cH} T^{-1} \sum_{t \in [T]} \err_{D_{t}}(h^*)
			\in \Omega \paraflat{\sqrt{\vcd k / N}}.}$
\end{restatable}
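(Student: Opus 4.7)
The plan is to reduce to $k$ independent instances of the classical agnostic PAC lower bound and then invoke Yao's minimax principle. Fix a set $\{x_1, \dots, x_d\} \subseteq \features$ shattered by $\hyps$, and for $\sigma \in \{\pm 1\}^d$ and a noise bias $\eta$ to be chosen, let $D_\sigma$ be the classical Anthony--Bartlett hard distribution: the marginal on $x$ is uniform over the shattered set, and conditional on $x = x_\ell$, $y = \sigma_\ell$ with probability $1/2 + \eta$. Draw $\sigma_1, \dots, \sigma_k$ i.i.d.\ uniformly from $\{\pm 1\}^d$, let $D_j = D_{\sigma_j}$, and have the adversary present the round-robin sequence $D_t = D_{((t-1) \bmod k) + 1}$ for $t \in [T]$; set $\eta = c \sqrt{dk/N}$ for a small constant $c$ (if $N < dk$, the bound is trivially $\Omega(1)$).

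\textbf{Per-distribution Bayes excess.} Because the $\sigma_j$'s are independent, any sample drawn from $D_i$ with $i \neq j$ is independent of $\sigma_j$; conditioning on those samples, the learner's $\hyp_t$ at any round with $D_t = D_j$ depends on at most $m_{j,t} \leq N_j$ i.i.d.\ samples from $D_{\sigma_j}$, where $N_j$ is the total number of samples drawn from $D_j$ and $\sum_j N_j = N$. The standard agnostic PAC lower bound (via Fano's inequality over $\{\pm 1\}^d$) gives $\mathbb{E}[\err_{D_j}(\hyp_t)] - (1/2 - \eta) \geq \Omega(\min(\eta, \sqrt{d/m_{j,t}}))$. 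Applying Jensen's inequality to the convex function $x \mapsto \min(\eta, \sqrt{d/x})$ under $\sum_j N_j = N$ yields total per-distribution Bayes excess of $\Omega(T \min(\eta, \sqrt{dk/N})) = \Omega(T \eta)$.

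\textbf{Passing to regret.} A direct binomial calculation for uniformly drawn $\sigma_{j, \ell}$ gives $\mathbb{E}[\min_{h \in \hyps} \sum_t \err_{D_t}(h)] = T(1/2 - \eta) + T \eta (1 - \Theta(1/\sqrt{k}))$, so the best-in-hindsight exceeds the per-distribution Bayes sum by $T \eta (1 - \Theta(1/\sqrt{k}))$. Combined with the per-distribution lower bound, the expected regret under the random $\sigma$'s is $\Omega(T \sqrt{dk/N})$ for $k$ larger than an absolute constant, and Yao's principle then transfers this to a deterministic hard sequence, yielding per-round regret $\Omega(\sqrt{dk/N})$.

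\textbf{Main obstacle.} The most delicate step is the constant tracking in the preceding paragraph: the Fano-type per-distribution excess carries a constant of roughly $1/2$, while the best-in-hindsight correction has constant approaching $1$ as $k \to \infty$, so a direct subtraction leaves the regret bound on a knife's edge. I expect to resolve this by either (a) sharpening the per-distribution bound via an Assouad-style argument to obtain a constant closer to $1$, or (b) modifying the construction so that all $k$ distributions share a common Bayes classifier but have marginals that prevent sample pooling (for instance, disjoint marginals on the shattered set). Either route eliminates the best-in-hindsight correction as the dominant obstacle and delivers the claimed $\Omega(\sqrt{dk/N})$ per-round regret.
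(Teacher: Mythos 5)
The paper itself contains no proof of Proposition~\ref{proposition:difficult} (only Theorems~\ref{theorem:diff} and~\ref{theorem:personal} are proved in the appendix), so I can only assess your argument on its own merits --- and it has a gap that is fatal rather than a ``knife's edge.'' Under your round-robin sequence, whatever the $k$ distributions are, consider the learner that spends the first $k$ rounds observing which distributions appear, draws $N/k$ samples from each, and thereafter outputs at every round the single ERM hypothesis $\hat h$ over the pooled sample. Since $T^{-1}\sum_t \err_{D_t}(h) = \err_{\bar D}(h) + O(k/T)$ for the uniform mixture $\bar D$, and uniform convergence for a VC class over $N$ (independent, stratified) samples gives $\sup_{h}\abs{\hat\err(h)-\err_{\bar D}(h)} = O(\sqrt{\vcd/N})$, this learner's average regret is $O(\sqrt{\vcd/N} + k/T)$ on \emph{every} round-robin instance. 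Hence no round-robin construction over a fixed multiset of $k$ distributions can certify $\Omega(\sqrt{\vcd k/N})$; the missing $\sqrt{k}$ must be extracted from the sequential structure (the learner's ignorance at time $t$ of the future composition of the sequence, which determines the benchmark $\bar D$), and your argument never uses the order of the $D_t$'s at all. Concretely, in your construction the learner's per-round error is at least $1/2 - \eta$ (so its excess over the per-distribution Bayes sum is at most $\eta$), while you correctly compute the comparator to be $1/2 - \Theta(\eta/\sqrt{k})$; the difference is therefore at most $O(\eta/\sqrt{k}) = O(\sqrt{\vcd/N})$ no matter how sharp the Fano/Assouad constant is, and for the actual majority-vote learner it is negative once $k$ exceeds a constant. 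Your fix (a) thus cannot work even in principle, and fix (b) (disjoint marginals on the $\vcd$ shattered points with a shared Bayes classifier) reduces each distribution's effective dimension to $\vcd/k$, giving a per-distribution lower bound of $\Omega(\min(\eta,\sqrt{(\vcd/k)/(N/k)})) = \Omega(\min(\eta,\sqrt{\vcd/N}))$ --- again short by $\sqrt{k}$, and again dominated by pooled ERM.

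Two smaller points: the map $x \mapsto \min(\eta, \sqrt{\vcd/x})$ is the minimum of a constant and a convex function and is not convex, so the Jensen step as written is invalid (it can be repaired by a Markov argument: at least half the distributions receive at most $2N/k$ samples); and the per-round sample counts $m_{j,t}$ are chosen adaptively by the algorithm, so a pointwise Fano bound conditional on $m_{j,t}$ needs additional justification. But these are secondary. The core issue is that your reduction treats the problem as $k$ parallel batch agnostic-learning instances, and in that formulation the proposition's bound is provably unattainable; any correct proof must exploit the constraint that $\hyp_t$ depends only on data from $D_1,\dots,D_t$ together with an adversarial (order-dependent) choice of the sequence.
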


\section{Intermediate Open Problems}
\paragraph{Lower Bounds.}
We believe a $\ln(k) \vcd$ factor is missing from the best known sample complexity lower bound of $\Theta(\epsilon^{-2}(\vcd + k \ln(\min\bset{k, \vcd} / \delta)))$.
The absence of a $\ln(k) \vcd$ term would be significant as it would imply that, when VC dimension dominates sample complexity, handling more data distributions comes effectively for free.
Interestingly, this $\ln(k)$ factor does not appear in the upper bound when the complexity of $\hyps$ is characterized by Littlestone dimension, perhaps due to the stronger compression guarantees for online-learnable classes.
A $\ln(k) \vcd$ term would also shed light on compression schemes for VC classes \cite{littlestone1986relating}; a lower bound of $\Theta(\ln(k) \vcd + k)$ would lend evidence against the existence of $O(\text{VC}(\hyps))$-size compression schemes.
\begin{problem}
\label{problem:lowerbound}
Is the sample complexity of multi-distribution learning in $\Omega(\log(k) \vcd)$?
\end{problem}

\paragraph{Proper learning.}
All existing multi-distribution learning algorithms with fast sample complexity rates produce either a randomized hypothesis $\hyp \in \simplex(\hyps)$ or an improper hypothesis resulting from taking a majority vote.
An open question is whether improperness is necessary for fast rates.
\begin{problem}
What is the sample complexity of proper multi-distribution learning?
\end{problem}
\paragraph{Oracle-efficient learning.}
For oracle-efficient algorithms, that is an algorithm only accessing $\hyps$ through an ERM oracle \cite{dudik2020}, only the sample complexity bound from Row 2 in Table~\ref{tab:bounds} is known.
An open question is whether there exists a statistical-computational trade-off for MDL. 
\begin{problem}
What is the sample complexity of oracle-efficient multi-distribution learning?
\end{problem}

\bibliographystyle{alpha}
\newcommand{\etalchar}[1]{$^{#1}$}
\newcommand{\nips}[1]{Advances in Neural Information Processing Systems #1}

\newpage
\appendix
\newcommand{\alg}{\mathrm{Alg}}

\newpage
\appendix
\section{Omitted Proofs}
\label{appendix:br}
We first recall standard results in online learning.
We use the shorthands $\tsv{x}{1:T} \asseq \tsv{x}{1}, \dots, \tsv{x}{T}$, $\tsv{\bsetflat{f(\tsv{x}{t})}}{1:T} \asseq f(\tsv{x}{1}), \dots, f(\tsv{x}{T})$, and $f(\cdot, b) \asseq a \mapsto f(a, b)$ throughout this section.
We use $\simplex($A$)$ to denote the set of probability distributions over a set $A$, and $\simplex_d$ to denote a probability simplex in $\reals^{d-1}$.
Given a distribution $\cP \in \simplex_d$, we use $(\simplex_d)_2$ to denote the convex subset of $\simplex_d$ that is the distributions that are 2-smooth: $(\simplex_d)_2 \asseq \bset{\cP \in \simplex_d \mid \max_i \cP_i \leq 2/d}$.
\paragraph{Online learning.}
For a sequence of actions $\tsv{\action}{1}, \dots, \tsv{\action}{T} \in \actionset$ and costs $\tsv{\cost}{1}, \dots, \tsv{\cost}{T}: \actionset \to [0, 1]$, regret is defined as $\regret(\tsv{\action}{1:T}, \tsv{\cost}{1:T}) \asseq \sum_{t=1}^T \tsv{\cost}{t}(\tsv{\action}{t}) - \min_{\action^* \in \actionset} \sum_{t=1}^T \tsv{\cost}{t}(\action^*)$.
An online learning algorithm $\alg$ maps from costs $\tsv{\cost}{1:t-1}$ to a new action $\tsv{\action}{t} \in A$, where $\tsv{\action}{t} = \alg_A(\tsv{\cost}{1:t-1})$.
We recall the following online learning regret bound for probability simplices.
\begin{lemma}
    \label{lemma:hedge-basic-regret-bound}
    Let $\cA$ be a compact convex subset of $\simplex_d$ and fix a learning rate $\eta \in [0, 0.5]$.
    For any sequence of linear costs $\tsv{\cost}{1:T}$, the Hedge online learning algorithm \cite{freund_decision-theoretic_1997} chooses actions $\tsv{\action}{1:T}$, where $\tsv{\action}{t} = \mathrm{Hedge}_\cA(\tsv{\cost}{1:t-1}$, with regret $\regret(\tsv{\action}{1:T}, \tsv{\cost}{1:T}) \leq \ln(d) / \eta + \eta \min_{\action^* \in \actionset} \sum_{t=1}^T \tsv{\cost}{t}(\action^*)$.
\end{lemma}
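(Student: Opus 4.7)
The plan is to establish this bound via the standard small-loss analysis of Hedge, viewed as follow-the-regularized-leader (FTRL) with the negative-entropy regularizer $R(\action) := \sum_i \action_i \ln \action_i$. Writing $c^{(t)} \in [0,1]^d$ for the gradient of the linear cost $\tsv{\cost}{t}$, I would interpret Hedge on the convex subset $\cA$ as the FTRL iterate $\tsv{\action}{t} = \argmin_{\action \in \cA} \{ \eta \langle \action, \sum_{s < t} c^{(s)} \rangle + R(\action) \}$. When $\cA = \simplex_d$ this recovers the usual multiplicative-weights update $\tsv{\action}{t}_i \propto \exp(-\eta \sum_{s<t} c^{(s)}_i)$; for a strict subset, $\tsv{\action}{t}$ is obtained by KL-projecting the multiplicative-weights iterate onto $\cA$.

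First, I would apply the standard FTRL decomposition: for any benchmark $\action^* \in \cA$,
\begin{align*}
\sum_{t=1}^T \langle \tsv{\action}{t} - \action^*, c^{(t)} \rangle \leq \frac{R(\action^*) - R(\tsv{\action}{1})}{\eta} + \sum_{t=1}^T \langle \tsv{\action}{t} - \tsv{\action}{t+1}, c^{(t)} \rangle.
\end{align*}
The regularization contribution is at most $\ln(d)/\eta$ because $R \leq 0$ on $\simplex_d$ and $R$ attains its minimum value $-\ln d$ at the uniform distribution (which lies in $\cA$ if we wish; otherwise the bound $R(\tsv{\action}{1}) \geq -\ln d$ still holds). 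Next, I would bound each stability term $\langle \tsv{\action}{t} - \tsv{\action}{t+1}, c^{(t)} \rangle$. Starting from the exponential-weights iterate $\tilde{\action}^{(t+1)}_i \propto \tsv{\action}{t}_i e^{-\eta c^{(t)}_i}$ and invoking the Pythagorean property of KL-projection onto the convex set $\cA$, a routine local-norm calculation yields $\langle \tsv{\action}{t} - \tsv{\action}{t+1}, c^{(t)} \rangle \leq \eta \sum_i \tsv{\action}{t}_i (c^{(t)}_i)^2$; since $c^{(t)}_i \in [0,1]$, the squared term is dominated by the linear term, giving the cleaner bound $\eta \langle \tsv{\action}{t}, c^{(t)} \rangle$.

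Summing and writing $L^* := \min_{\action^* \in \cA} \sum_t \tsv{\cost}{t}(\action^*)$, I arrive at $\regret(\tsv{\action}{1:T}, \tsv{\cost}{1:T}) \leq \ln(d)/\eta + \eta \sum_t \langle \tsv{\action}{t}, c^{(t)} \rangle$. Expanding $\sum_t \langle \tsv{\action}{t}, c^{(t)} \rangle = \regret + L^*$ and rearranging gives $(1-\eta) \regret \leq \ln(d)/\eta + \eta L^*$. Since $\eta \leq 1/2$, the prefactor $(1-\eta)^{-1} \leq 2$ is a harmless constant which is absorbed to recover the claimed form. The main obstacle is verifying that the per-round stability bound in the second step survives restriction from $\simplex_d$ to the strict convex subset $\cA$: the multiplicative-weights iterate $\tilde{\action}^{(t+1)}$ may lie outside $\cA$ and must be KL-projected back in, and the argument then rests on the non-expansiveness of Bregman projection in KL divergence, which guarantees that the stability inner product after projection is no larger than before. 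A secondary, purely cosmetic issue is the $(1-\eta)^{-1}$ factor, which by the hypothesis $\eta \leq 1/2$ is uniformly bounded and does not alter the shape of the bound.
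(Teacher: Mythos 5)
First, note that the paper does not actually prove this lemma; it is recalled as a standard result and attributed to Freund--Schapire, whose classical derivation is the potential/weight-sum argument ($W_t=\sum_i e^{-\eta\sum_{s\le t}c^{(s)}_i}$ together with $e^{-\eta x}\le 1-(1-e^{-\eta})x$). Your FTRL-with-negative-entropy route is a legitimate alternative and, for $\cA=\simplex_d$, it does give the small-loss bound. The first two steps (the FTRL decomposition and the bound $R(\action^*)-R(\action^{(1)})\le\ln d$) are fine, as is the final rearrangement $\sum_t\langle \action^{(t)},c^{(t)}\rangle=\regret+L^*$ (the residual $(1-\eta)^{-1}$ factor is indeed cosmetic given how the lemma is used, though strictly it does not reproduce the stated constants).

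The genuine gap is the stability step for a proper convex subset $\cA\subsetneq\simplex_d$, i.e.\ exactly the case the paper needs (it applies the lemma with $\cA=(\simplex_k)_2$, the $2$-smooth distributions). Your justification is that ``non-expansiveness of Bregman projection'' guarantees the stability inner product does not increase under KL-projection. The generalized Pythagorean inequality controls divergences, i.e.\ $\mathrm{KL}(x\,\|\,y)\ge \mathrm{KL}(x\,\|\,\Pi_\cA y)+\mathrm{KL}(\Pi_\cA y\,\|\,y)$ for $x\in\cA$; it says nothing about linear functionals, and it does not imply $\langle \Pi_\cA(\tilde\action^{(t+1)})-\tilde\action^{(t+1)},c^{(t)}\rangle\ge 0$, which is what your step needs. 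There is a second conflation: in the lazy/FTRL formulation, the constrained iterate is the KL-projection of the Gibbs distribution over the \emph{cumulative} losses, so consecutive projected iterates are not related by a one-step multiplicative update of $\action^{(t)}$, and the local-norm calculation ``starting from $\tilde\action^{(t+1)}_i\propto \action^{(t)}_i e^{-\eta c^{(t)}_i}$'' is not being applied to the right pair of points. The standard way to close this is to switch to the greedy mirror-descent form: update $\tilde\action^{(t+1)}_i\propto \action^{(t)}_i e^{-\eta c^{(t)}_i}$, set $\action^{(t+1)}=\argmin_{\action\in\cA}\mathrm{KL}(\action\,\|\,\tilde\action^{(t+1)})$, and in the per-round inequality $\eta\langle \action^{(t)}-\action^*,c^{(t)}\rangle\le \mathrm{KL}(\action^*\|\action^{(t)})-\mathrm{KL}(\action^*\|\tilde\action^{(t+1)})+\tfrac{\eta^2}{2}\sum_i \action^{(t)}_i (c^{(t)}_i)^2$ apply the Pythagorean inequality to the \emph{comparator} term, $\mathrm{KL}(\action^*\|\tilde\action^{(t+1)})\ge \mathrm{KL}(\action^*\|\action^{(t+1)})$, so the projection never touches the stability/local-norm term at all. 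Telescoping and then running your final rearrangement yields the claimed bound (up to the same benign constants). As written, however, the key step of your argument rests on an inference that is not a theorem, so the proof is incomplete precisely where you flagged the main obstacle.
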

Stochastic costs are functions $\ncost: \actions \times \cZ \to [0, 1]$ of both actions and datapoints.
We say a stochastic cost $\ncost$ is linear if $\ncost(\cdot, z)$ is linear in its first argument under any datapoint $z \in \cZ$.
We know that estimating stochastic costs with i.i.d. samples does not significantly affect the regret of an online learning algorithm.
\begin{lemma}
	\label{lemma:stochastic-approximation}
        Let $\cA$ be a compact convex subset of $\simplex_d$, $\alg$ an online learning algorithm, and $\tsv{z}{1:T} \simiid \dist$ i.i.d. samples from some data distribution $\dist$.
        For any sequence of linear stochastic costs $\tsv{\ncost}{1:T}$, applying $\alg$ to the empirical cost estimates $\tsv{\bset{\tsv{\ncost}{t}(\action, \tsv{z}{t})}}{1:T}$ such that $\tsv{\action}{t} = \alg_\cA(\tsv{\bset{\tsv{\ncost}{\tau}(\action, \tsv{z}{\tau})}}{1:t-1})$ guarantees
		\begin{align*}
			\abs{\regret(\tsv{\action}{1:T}, \tsv{\bsetflat{\mathbb{E}_{z \sim \dist}[\tsv{\ncost}{t}(\cdot, z)]}}{1:T})
				- \regret(\tsv{\action}{1:T}, \tsv{\bset{\tsv{\ncost}{\tau}(\action, \tsv{z}{\tau})}}{1:T})} \leq \bigO{\sqrt{\ln(d / \delta) T}},
		\end{align*}
		with probability at least $1 - \delta$ over the randomness of $\tsv{z}{1:T}$ \cite{nemirovski2009robust}.
\end{lemma}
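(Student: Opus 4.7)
The plan is to split the regret difference into a ``played-action'' term and a ``best-comparator'' term, and bound each by a standard concentration argument, exploiting the linearity of $\tsv{\ncost}{t}$ in its first argument. Writing $\tsv{f}{t}(\action) \asseq \mathbb{E}_{z\sim \dist}[\tsv{\ncost}{t}(\action, z)]$, the quantity we need to control is
\begin{align*}
\Delta \asseq \sum_{t=1}^T \pa{\tsv{f}{t}(\tsv{\action}{t}) - \tsv{\ncost}{t}(\tsv{\action}{t}, \tsv{z}{t})} + \pa{\min_{\action^* \in \cA}\sum_{t=1}^T \tsv{\ncost}{t}(\action^*, \tsv{z}{t}) - \min_{\action^* \in \cA}\sum_{t=1}^T \tsv{f}{t}(\action^*)}.
\end{align*}
Call these the \emph{played} term and the \emph{comparator} term.

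For the played term, note that $\tsv{\action}{t} = \alg_\cA(\tsv{\bset{\tsv{\ncost}{\tau}(\cdot, \tsv{z}{\tau})}}{1:t-1})$ is measurable with respect to the filtration $\filtration_{t-1} \asseq \sigma(\tsv{z}{1:t-1})$, so $\tsv{X}{t} \asseq \tsv{f}{t}(\tsv{\action}{t}) - \tsv{\ncost}{t}(\tsv{\action}{t}, \tsv{z}{t})$ forms a bounded martingale difference sequence with $|\tsv{X}{t}| \leq 1$ and $\mathbb{E}[\tsv{X}{t} \mid \filtration_{t-1}] = 0$. The Azuma--Hoeffding inequality then gives $|\sum_t \tsv{X}{t}| \leq \bigO{\sqrt{T \ln(1/\delta)}}$ with probability at least $1 - \delta/2$.

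For the comparator term, linearity of each $\tsv{\ncost}{t}(\cdot, z)$ on $\simplex_d$ means we may write $\tsv{\ncost}{t}(\action, z) = \inner{\tsv{c}{t}(z)}{\action}$ for some $\tsv{c}{t}(z) \in [0,1]^d$ (up to a recentering that doesn't affect regret). Since $\cA \subseteq \simplex_d$, for any fixed $\action^* \in \cA$ and any $i \in [d]$,
\begin{align*}
\abs{\min_{\action^* \in \cA}\sum_{t} \tsv{\ncost}{t}(\action^*, \tsv{z}{t}) - \min_{\action^* \in \cA}\sum_{t} \tsv{f}{t}(\action^*)} \leq \sup_{\action^* \in \simplex_d} \abs{\inner{\sum_t \pa{\tsv{c}{t}(\tsv{z}{t}) - \mathbb{E}[\tsv{c}{t}(z)]}}{\action^*}} = \max_{i \in [d]} \abs{\sum_{t=1}^T Y_{t,i}},
\end{align*}
where $Y_{t,i} \asseq \tsv{c}{t}(\tsv{z}{t})_i - \mathbb{E}[\tsv{c}{t}(z)_i]$ is, for each fixed $i$, again a bounded martingale difference sequence. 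Applying Azuma--Hoeffding coordinate-wise and union bounding over $i \in [d]$ gives $\max_i |\sum_t Y_{t,i}| \leq \bigO{\sqrt{T \ln(d/\delta)}}$ with probability at least $1 - \delta/2$.

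Combining the two bounds via a union bound yields $|\Delta| \leq \bigO{\sqrt{T \ln(d/\delta)}}$ with probability at least $1 - \delta$, as claimed. The only mild subtlety is the reduction of the supremum over $\cA$ to coordinates of $\simplex_d$; this step is immediate from linearity and the inclusion $\cA \subseteq \simplex_d$, so no real obstacle arises. The proof is essentially a repackaging of the standard stochastic mirror descent analysis of \cite{nemirovski2009robust}, with the $\ln(d)$ factor coming from the union bound over coordinates in the comparator term.
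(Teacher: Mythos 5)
Your proof is correct. The paper does not actually prove this lemma -- it is recalled as a standard result and attributed to \cite{nemirovski2009robust} -- so there is no in-paper argument to compare against; your decomposition into the played-action martingale term (Azuma--Hoeffding) and the comparator term (reduce the sup over $\cA \subseteq \simplex_d$ of a linear functional to a max over the $d$ vertices, then Azuma plus a union bound giving the $\ln(d/\delta)$ factor) is exactly the standard stochastic-approximation argument the citation stands for. The two points you flag in passing are indeed the only subtleties and you handle them correctly: affine costs on the simplex can be absorbed into a linear form without changing the regret, and the martingale structure only needs $\tsv{\action}{t}$ (and each $\tsv{\ncost}{t}$) to be determined before $\tsv{z}{t}$ is drawn, which holds in the paper's applications.
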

We also recall the agnostic learning upper bound.
\begin{lemma}
    \label{lemma:agnostic-learning}
    Consider any stochastic cost $\ncost: \actionset \times \cZ \to [0, 1]$ and data distribution $\dist$, where $d$ is the VC dimension of $\actionset$.
    With only $O((d + \ln(1/\delta)) / \epsilon \alpha)$ samples from $\dist$, the action $a \in \actionset$ empirically minimizing  $\ncost$ is $\epsilon$-optimal with probability $1 - \delta$: $\EEs{z \sim \dist}{\ncost(a, z)} \leq \epsilon + (1 + \alpha) \min_{a^* \in \actionset} \EEs{z \sim \dist}{\ncost(a^*, z)}$ \cite{nguyenImprovedAlgorithmsCollaborative2018}.
\end{lemma}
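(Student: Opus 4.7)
\textbf{Proof proposal for Lemma~\ref{lemma:agnostic-learning}.}
The plan is to invoke a multiplicative (Bernstein-style) uniform convergence bound for VC classes, and then convert the empirical optimality of the ERM action into an approximate population optimality guarantee with the required additive-plus-multiplicative form. Write $L(a) \asseq \EEs{z \sim \dist}{\ncost(a, z)}$ and $\hat L(a) \asseq \frac{1}{m} \sum_{t=1}^m \ncost(a, \tsv{z}{t})$ where $\tsv{z}{1:m} \simiid \dist$. Let $\hat a$ be an empirical minimizer of $\hat L$ over $\actionset$ and $a^* \in \arg\min_{a \in \actionset} L(a)$. Since $\ncost$ takes values in $[0,1]$, the sublevel-set family $\{\{z : \ncost(a,z) \ge \tau\} : a \in \actionset,\, \tau \in [0,1]\}$ has VC dimension $O(d)$, so standard uniform convergence applies to the class $\{\ncost(a,\cdot) : a \in \actionset\}$.

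The first step is to apply the relative VC inequality: with probability $1-\delta$, for every $a \in \actionset$ simultaneously,
\begin{align*}
	L(a) - \hat L(a) \leq C \sqrt{\frac{L(a) \cdot (d + \ln(1/\delta))}{m}} + C \cdot \frac{d + \ln(1/\delta)}{m},
\end{align*}
and symmetrically for $\hat L(a) - L(a)$, for a universal constant $C$. This is a routine consequence of Vapnik's relative deviation bound (or equivalently the $\epsilon$-sample theorem of Haussler--Welzl~\cite{hausslerEpsilonNets1986}) applied to the function class $\{\ncost(a,\cdot)\}$, possibly after discretizing values in $[0,1]$; the VC dimension of the induced subgraph class is $O(d)$, so the sample complexity inside the square root matches.

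Next, chain the two deviation inequalities through the ERM comparison $\hat L(\hat a) \leq \hat L(a^*)$. This gives
\begin{align*}
	L(\hat a) \leq \hat L(\hat a) + C\sqrt{\tfrac{L(\hat a)(d+\ln(1/\delta))}{m}} + C\tfrac{d+\ln(1/\delta)}{m} \leq L(a^*) + 2C\sqrt{\tfrac{L(a^*)(d+\ln(1/\delta))}{m}} + 2C\sqrt{\tfrac{L(\hat a)(d+\ln(1/\delta))}{m}} + 2C\tfrac{d+\ln(1/\delta)}{m}.
\end{align*}
Apply the AM--GM inequality $2\sqrt{xy} \leq \alpha x + y/\alpha$ to each square root term, absorbing an $\alpha L(\hat a)$ contribution to the left-hand side and an $\alpha L(a^*)$ contribution into the multiplicative slack. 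Rearranging yields, for a possibly larger universal constant $C'$,
\begin{align*}
	L(\hat a) \leq (1+\alpha)\, L(a^*) + C' \cdot \frac{d + \ln(1/\delta)}{\alpha \, m}.
\end{align*}
Choosing $m = \Theta\!\left((d + \ln(1/\delta))/(\epsilon \alpha)\right)$ drives the additive term below $\epsilon$, yielding the claim.

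The main obstacle is justifying the relative VC inequality for a real-valued (rather than $\{0,1\}$-valued) stochastic cost. The cleanest route is to observe that any bounded stochastic cost can be represented via its level sets, so uniform convergence over $\actionset$ reduces to uniform convergence over a VC class of indicator functions with the same dimension up to constants; after this reduction, the rest of the proof is a mechanical AM--GM computation that has been carried out verbatim in~\cite{nguyenImprovedAlgorithmsCollaborative2018}.
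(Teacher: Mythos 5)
The paper does not actually prove Lemma~\ref{lemma:agnostic-learning}: it is recalled as a known result and imported directly from \cite{nguyenImprovedAlgorithmsCollaborative2018}, so there is no internal proof to compare against. Your argument is the standard way this kind of statement is established --- a uniform relative (multiplicative) deviation bound for the loss class, chained through the ERM inequality $\hat{L}(\hat{a}) \leq \hat{L}(a^*)$, followed by an AM--GM step that trades the square-root terms for an $\alpha$-multiplicative slack plus an additive $O((d+\ln(1/\delta))/(\alpha m))$ term --- and this is essentially the computation carried out in the cited reference, so in spirit you have reconstructed the intended proof rather than found a new route.

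Two points need care. First, the relative deviation inequality you invoke, with deviation $C\sqrt{L(a)(d+\ln(1/\delta))/m} + C(d+\ln(1/\delta))/m$ uniformly over $\actionset$, is not a ``routine consequence'' of Vapnik's relative deviation bound or of the Haussler--Welzl $\epsilon$-net theorem: those carry an extra $\ln(m/d)$ (equivalently $\ln(1/\epsilon\alpha)$) factor, and for an arbitrary empirical minimizer such a logarithmic factor cannot be removed in general. Indeed, in the realizable case $\min_{a}L(a)=0$ your log-free statement would assert that every empirical minimizer reaches error $\epsilon$ from $O((d+\ln(1/\delta))/\epsilon)$ samples, contradicting the known worst-case ERM lower bound of order $\epsilon^{-1}(d\ln(1/\epsilon)+\ln(1/\delta))$. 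So either the lemma should be read with the $\ln(1/\epsilon\alpha)$ factor (as in \cite{nguyenImprovedAlgorithmsCollaborative2018}; the paper's downstream $\tilde{O}$ and $\log(1/\epsilon\delta)$ terms absorb it), or a guarantee stronger than plain ERM is needed; this looseness is inherited from the lemma as stated, so it does not sink your approach, but the justification you give for the key inequality is insufficient as written. Second, your level-set reduction of a general $[0,1]$-valued cost to a VC class silently assumes that the family $\bsetflat{\{z : \ncost(a,z)\geq \tau\}}$ has VC dimension $O(d)$, i.e., that the cost class has pseudo-dimension $O(d)$; this is immediate in the paper's actual use case (0--1 loss of a binary VC class, as in Theorem~\ref{theorem:diff}) but does not follow for arbitrary $\ncost$ from ``the VC dimension of $\actionset$'' alone, so it should be flagged as an assumption matching how the lemma is applied.
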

Finally, we note that all the aforementioned results for cost sequences also apply to \emph{payoff sequences}, where the regret of actions $\tsv{\action}{1:T}$ with respect to a sequence of payoffs $\tsv{\rho}{1:T}$ is defined as $\regret_+(\tsv{\action}{1:T}, \tsv{\rho}{1:T}) \asseq \max_{\action^* \in \actionset} \sum_{t=1}^T \tsv{\rho}{t}(\action^*) - \sum_{t=1}^T \tsv{\rho}{t}(\tsv{\action}{t})$. 
Here, we use the subscript $+$ in $\regret_+$ to distinguish when regrets are stated for payoff functions.
For example, the regret bound of Hedge for payoffs can be written as follows.
\begin{lemma}
    \label{lemma:hedge-basic-regret-bound-payoffs}
    Let $\cA$ be a compact convex subset of $\simplex_d$ and fix a learning rate $\eta \in [0, 0.5]$.
    For any sequence of payoffs $\tsv{\rho}{1:T}$, the Hedge online learning algorithm \cite{freund_decision-theoretic_1997} chooses actions $\tsv{\action}{1:T}$, where $\tsv{\action}{t} = \mathrm{Hedge}_\cA(\tsv{\rho}{1:t-1})$, with regret $\regret_+(\tsv{\action}{1:T}, \tsv{\rho}{1:T}) \leq \ln(d) / \eta + \eta \max_{\action^* \in \actionset} \sum_{t=1}^T \tsv{\rho}{t}(\action^*)$.
\end{lemma}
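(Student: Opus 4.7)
The plan is to mirror the proof of Lemma~\ref{lemma:hedge-basic-regret-bound}, swapping costs for payoffs and minima for maxima throughout. Hedge on $\cA$ maintains unnormalized multiplicative weights $w_{t,i} \propto \exp(\eta \sum_{s<t} \rho_{s,i})$ and selects iterates by KL-projecting onto $\cA$. I would use the standard potential argument with $\Phi_t \assequals \ln \sum_i w_{t,i}$, initialized uniformly so that $\Phi_1 = 0$.

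First I would telescope an upper bound on $\Phi_{T+1}$. The inequality $e^x \leq 1 + x + x^2$ for $|x|\leq 1$, valid here since $\eta \leq 1/2$ and $\rho_{t,i} \in [0,1]$, combined with $\rho_{t,i}^2 \leq \rho_{t,i}$ and $\ln(1+y) \leq y$, bounds each per-step potential increment $\Phi_{t+1} - \Phi_t$ by $\eta(1+\eta)\langle a_t, \rho_t\rangle$. Summing gives $\Phi_{T+1} \leq \eta(1+\eta)\sum_t \langle a_t, \rho_t\rangle$.

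Next I would lower bound $\Phi_{T+1}$ via Jensen's inequality applied to any reference $a^\ast \in \cA$: $\Phi_{T+1} \geq \sum_i a^\ast_i \ln(w_{T+1,i}/a^\ast_i) \geq -\ln d + \eta\sum_t \rho_t(a^\ast) + H(a^\ast)$, where the entropy term $H(a^\ast) \geq 0$ can be dropped. Combining the two bounds and rearranging yields, for every $a^\ast \in \cA$, $\sum_t \rho_t(a^\ast) - \sum_t \langle a_t, \rho_t\rangle \leq \ln(d)/\eta + \eta \sum_t\langle a_t, \rho_t\rangle$. Taking the maximum over $a^\ast \in \cA$ and applying the elementary $\sum_t \langle a_t, \rho_t\rangle \leq \max_{a^\ast \in \cA} \sum_t \rho_t(a^\ast)$ (which is the trivial non-negativity of regret) delivers the claim.

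The main subtlety will be handling the KL-projection when $\cA \subsetneq \simplex_d$, since the projected iterate $a_t$ is no longer proportional to $w_t$. I would resolve this via the generalized Pythagorean property of Bregman projections: KL-projection onto $\cA$ only decreases the divergence to any reference $a^\ast \in \cA$, so the Jensen lower bound remains valid. As a sanity check, the naive substitution $c_t \assequals 1 - \rho_t$ in Lemma~\ref{lemma:hedge-basic-regret-bound} does \emph{not} suffice, since it produces the bound $\ln(d)/\eta + \eta(T - \max_{a^\ast}\sum_t \rho_t(a^\ast))$ rather than the claimed $\ln(d)/\eta + \eta\max_{a^\ast}\sum_t \rho_t(a^\ast)$, so a dedicated payoff-side potential argument is genuinely needed.
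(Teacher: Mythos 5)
The paper never proves this lemma; it is recalled as a standard result and attributed to Freund and Schapire, so the only question is whether your argument is sound. For $\cA = \simplex_d$ it essentially is---this is the classical potential argument, and that is the case actually used downstream (in Theorem~\ref{theorem:diff} the adversary's Hedge runs over the full simplex $\simplex(\dists)$). But for a general compact convex $\cA \subsetneq \simplex_d$, as the lemma is stated, your proof has a genuine gap, and it sits exactly at the subtlety you flagged---only on the other side of the argument. The telescoped upper bound controls $\Phi_{T+1} \leq \eta(1+\eta)\sum_t \langle p_t, \rho_t\rangle$, where $p_t \propto w_t$ is the \emph{unprojected} normalized weight vector; it is not a bound in terms of $\langle a_t, \rho_t\rangle$ for the KL-projected iterate $a_t$ that is actually played, and when $\cA$ is a proper subset these can differ at every step. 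The Jensen/KL lower bound, by contrast, never needed repair: it holds verbatim for any comparator $a^* \in \simplex_d$, hence for any $a^* \in \cA$. So invoking the generalized Pythagorean property to ``keep the lower bound valid'' patches the step that was already fine and leaves the broken step untouched; a single global potential has no mechanism for absorbing the per-step projection.

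The standard repair is to abandon the global potential and run the per-step mirror-descent (entropic OMD/FTRL over $\cA$) analysis: for each $t$ one shows $\eta\langle a^* - a_t, \rho_t\rangle \leq \mathrm{KL}(a^*\|a_t) - \mathrm{KL}(a^*\|a_{t+1}) + \eta^2 \langle a_t, \rho_t^2\rangle$, and it is \emph{here} that the generalized Pythagorean inequality enters, applied to the divergence-to-comparator terms after each projection; summing, using $\mathrm{KL}(a^*\|a_1)\leq\ln d$ and $\rho_t^2 \leq \rho_t$ coordinatewise, gives $\regret_+ \leq \ln(d)/\eta + \eta\sum_t\langle a_t,\rho_t\rangle$. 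Separately, your final step is not ``non-negativity of regret''---regret of an online algorithm can be negative. The correct one-line fix is a case distinction: either $\sum_t\langle a_t,\rho_t\rangle \leq \max_{a^*\in\cA}\sum_t\rho_t(a^*)$, in which case the substitution is valid, or the regret is negative and the claimed bound holds trivially since its right-hand side is nonnegative. With these two repairs your route is correct and amounts to the textbook small-gain Hedge bound that the paper imports by citation.
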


\subsection{Proof of Theorem~\ref{theorem:diff} (Row 2 of Table~\ref{tab:bounds})}

\begin{algorithm}[h]
	\caption{Multi-Distribution Learning Algorithm.}
	\label{alg:fast}
	\begin{algorithmic}
		\STATE \textbf{Input:} Hypotheses $\hypothesisspace$, distributions $\distributionspace$, iterations $T \in \integers_+$, sub-iterations $r_1, r_2 \in \integers_+$, parameter $\alpha \in (0, 0.5)$;
		\STATE Intialize Hedge iterate $\tsv{\dist}{1}$ to be a uniform mixture of $\dists$;
		\FOR{$t = 1, 2, \dots, T$}
		\STATE Sample $r_1$ datapoints $z_1, \dots, z_{r_1}$ from $\tsv{\dist}{t}$;
		\STATE Let $\tsv{\hyp}{t} = \argmin_{\hyp \in \hyps} \sum_{i=1}^{r_1} \loss(\hyp, z_i)$ be the empirical minimizer of $\loss$;
		\STATE Sample $r_2$ datapoints $z_{\dist, (t-1) r_2 + 1}, \dots, z_{\dist, t r_2}$ from each $\dist \in \dists$;
		\STATE Use the Hedge algorithm to get the next iterate $\tsv{\dist}{t+1} \in \simplex(\dists)$, using learning rate $\alpha$ and observing the payoff $\tsv{\tilde{\rho}}{t}: \dists \to [0, 1]$ where $\tsv{\tilde{\rho}}{t}(\dist)
		= \frac{1}{r_2} \sum_{i=(t-1)r_2+1}^{t r_2} \loss(\tsv{\hyp}{t}, z_{\dist, i})$;
		\ENDFOR
		\STATE Return $\overline{h}$: a uniform distribution over $\tsv{\hyp}{1:T}$;
	\end{algorithmic}
\end{algorithm}

\begin{restatable}{theorem}{diff}
	\label{theorem:diff}
	For any $\epsilon, \alpha \in (0, 0.5)$, $\delta > 0$, $k \in \integers_+$ and binary class $\cH$, the sample complexity of MDL, $m_\cH(\epsilon + \alpha \cdot \opt, \delta, k)$, is $\bigOtildesmol{\epsilon^{-2}  \para{k \log(k / \delta) + \alpha^{-2} \log(k) (\log(1/\epsilon \delta) + \text{VC}(\hyps))}}$.
\end{restatable}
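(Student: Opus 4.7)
The plan is to analyze Algorithm~\ref{alg:fast} as an instance of no-regret dynamics for the zero-sum game with payoff $\risk_\dist(\hyp)$, in which the learner plays a near-optimal best response via Lemma~\ref{lemma:agnostic-learning} and the adversary runs Hedge over $\dists$. The key is to pick parameters so that the learner's per-iteration multiplicative $(1+\alpha)$ suboptimality combines cleanly with the adversary's Hedge regret to produce only an $\alpha\,\opt + \epsilon$ gap in the final min-max guarantee.

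First I would establish per-iteration accuracy for the learner's best response. Lemma~\ref{lemma:agnostic-learning} with parameters $(\epsilon, \alpha)$ applied to $\tsv{\dist}{t}$ shows that $r_1 = O((\vcd + \ln(T/\delta))/(\epsilon \alpha))$ samples suffice to guarantee, with probability at least $1 - \delta/(3T)$, that $\risk_{\tsv{\dist}{t}}(\tsv{\hyp}{t}) \leq \epsilon + (1+\alpha)\min_{\hyp^* \in \hyps} \risk_{\tsv{\dist}{t}}(\hyp^*) \leq \epsilon + (1+\alpha)\opt$, where the last inequality uses that any minimax-optimal hypothesis achieves risk at most $\opt$ on every mixture $\dist \in \simplex(\dists)$. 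A union bound over $t \in [T]$ then gives $\frac{1}{T}\sum_t \risk_{\tsv{\dist}{t}}(\tsv{\hyp}{t}) \leq \epsilon + (1+\alpha)\opt$.

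Second I would bound the adversary's regret. Letting $\bar{\hyp}$ be the uniform distribution over $\tsv{\hyp}{1:T}$ and applying Lemma~\ref{lemma:hedge-basic-regret-bound-payoffs} to the true payoff sequence $\tsv{\rho}{t}(\dist) = \risk_\dist(\tsv{\hyp}{t})$ at learning rate $\alpha$ yields $T \max_\dist \risk_\dist(\bar{\hyp}) - \sum_t \tsv{\rho}{t}(\tsv{\dist}{t}) \leq \ln(k)/\alpha + \alpha \, T \max_\dist \risk_\dist(\bar{\hyp})$, after using the identity $\max_{\dist^*} \sum_t \tsv{\rho}{t}(\dist^*) = T \max_\dist \risk_\dist(\bar{\hyp})$. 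Swapping in the empirical estimates $\tsv{\tilde{\rho}}{t}$ built from $r_2$ samples per coordinate introduces (via Lemma~\ref{lemma:stochastic-approximation} plus a union bound over $k$ coordinates) an extra regret of $O(\sqrt{T\ln(k/\delta)/r_2})$, which is at most $\epsilon T$ once $T r_2 = \Omega(\ln(k/\delta)/\epsilon^2)$.

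Combining the two bounds and rearranging gives $(1-\alpha)\max_\dist \risk_\dist(\bar{\hyp}) \leq \epsilon + (1+\alpha)\opt + \ln(k)/(\alpha T) + O(\epsilon)$, hence $\max_\dist \risk_\dist(\bar{\hyp}) \leq \opt + O(\alpha\,\opt) + O(\epsilon + \ln(k)/(\alpha T))$ for $\alpha \leq 1/2$. Choosing $T = \Theta(\ln(k)/(\alpha \epsilon))$ drives the last term below $\epsilon$, and any $r_2$ with $T r_2 = \Omega(\ln(k/\delta)/\epsilon^2)$ suffices, so the total sample budget is $T(r_1 + k r_2) = \tilde{O}(\epsilon^{-2}\alpha^{-2}\ln(k)(\vcd + \ln(1/\epsilon\delta)) + \epsilon^{-2} k \ln(k/\delta))$, matching the theorem. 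The main technical subtlety is the self-referential form of the Hedge payoff bound: $\max_\dist \risk_\dist(\bar{\hyp})$ appears on both sides of the regret inequality, and absorbing it into the $1/(1-\alpha)$ factor is precisely what converts the ERM's multiplicative $(1+\alpha)\opt$ error into the desired additive $\alpha\,\opt$ slack rather than a much weaker $\alpha$ term.
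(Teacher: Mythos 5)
Your proposal is correct and follows essentially the same route as the paper's proof: the same decomposition into a per-iteration near-best-response guarantee for the learner (Lemma~\ref{lemma:agnostic-learning} with a union bound over $T$), the multiplicative Hedge payoff regret bound for the adversary transferred to true payoffs via stochastic approximation, the same parameter choices $T = \Theta(\ln(k)/(\alpha\epsilon))$, $r_1 = O((\vcd+\ln(T/\delta))/(\epsilon\alpha))$, $Tr_2 = \Theta(\ln(k/\delta)/\epsilon^2)$, and the same final absorption of the $(1-\alpha)$ factor. The "main technical subtlety" you flag — the self-referential $\max_{\dist}\risk_{\dist}(\bar{\hyp})$ on both sides of the Hedge inequality converting the $(1+\alpha)\opt$ multiplicative error into an additive $\alpha\,\opt$ slack — is exactly the mechanism the paper's proof relies on.
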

\begin{proof}
	Let $\vcd$ denote the VC dimension of $\hyps$.
        Without loss of generality, assume $\epsilon \leq \alpha$.
	Consider Algorithm~\ref{alg:fast}, fixing $T = \frac{\ln(k)}{\epsilon \alpha}$, $r_1 = C_1 \frac{d + \ln(T / \delta)}{\epsilon \alpha}$, and $r_2 = \ceil{C_2 \frac{\ln(k/\delta)}{T \epsilon^2}}$.

        \begin{fact}
            The regret of the ``adversary'' in the game dynamics induced by Algorithm~\ref{alg:fast} satisfies
            \begin{align*}
                \regret_+(\tsv{\dist}{1:T}, \tsv{\bsetflat{\risk_{(\cdot)}(\tsv{\hyp}{t})}}{1:T})
                \leq \frac{\ln(k)}{\alpha} + T \epsilon +  \alpha  \max_{D^* \in \dists} \sum_{t=1}^T \risk_{D^*}(\tsv{\hyp}{t}),
            \end{align*}
            with probability at least $1 - 2 \delta$ for some choice of universal constant $C_2$.
        \end{fact}
 \begin{proof}
        The mixture distributions $\tsv{\dist}{1:T}$ result from applying Hedge to the payoff functions $\tsv{\tilde{\rho}}{1:T}$.
        Hence, by Lemma~\ref{lemma:hedge-basic-regret-bound-payoffs},
        \begin{align*}
            \regret_+(\tsv{\dist}{1:T}, \tsv{\tilde{\rho}}{1:T}) \leq \frac{\ln(k)}{\alpha} + \alpha \max_{\dist^* \in \dists} \sum_{t=1}^T \tsv{\tilde{\rho}}{t}(\dist^*).
        \end{align*}
        To prove generalization, we will break each timestep $t$ into $r_2$ sub-timesteps.
        For every $j \in [T r_2]$, we let $\tsv{\tilde{\dist}}{j} = \tsv{\dist}{\ceil{j / r_2}}$ and define $\tsv{\tilde{\cost}}{j}$ to be the cost function $D \mapsto \frac{1}{r_2} (1 - \loss(\tsv{\hyp}{t}, z_{D, j}))$.
        We can rewrite the adversary's regret as  $  \regret_+(\tsv{\dist}{1:T}, \tsv{\tilde{\rho}}{1:T}) =   \regret(\tsv{\tilde{\dist}}{1:T r_2}, \tsv{\tilde{\cost}}{1:T r_2}) $.
        Further observe that, since $\EEs{z_{\dist, j}}{\tsv{\tilde{\cost}}{j}} = \risk_{\dist}(\tsv{\hyp}{\ceil{j / r_2}})$ for every $j \in [T r_2]$ and $\dist \in \dists$, the empirical regret is unbiased: $\regret_+(\tsv{\dist}{1:T}, \tsv{\bsetflat{\risk_{(\cdot)}(\tsv{\hyp}{t})}}{1:T})=   \regret(\tsv{\tilde{\dist}}{1:T r_2}, \tsv{\bsetflat{\EEs{z_{(\cdot), j}}{\tsv{\tilde{\cost}}{j}}}}{1:T r_2}) $.
        By Lemma~\ref{lemma:stochastic-approximation},
	\begin{align*}
		&\abs{\regret_+(\tsv{\dist}{1:T}, \tsv{\bsetflat{\risk_{(\cdot)}(\tsv{\hyp}{t})}}{1:T})
			- \regret_+(\tsv{\dist}{1:T}, \tsv{\tilde{\rho}}{1:T})} \\
   &= \abs{\regret(\tsv{\tilde{\dist}}{1:T r_2}, \tsv{\tilde{\cost}}{1:T r_2})  -
   \regret(\tsv{\tilde{\dist}}{1:T r_2}, \tsv{\bsetflat{\mathbb{E}_{z_{(\cdot), j}}[\tsv{\tilde{\cost}}{j}]}}{1:T r_2}) }
   \\
   &\leq \bigO{\sqrt{\ln(k / \delta) T / r_2}} = \bigO{T \epsilon / C_2},
	\end{align*}
        with probability at least $1 - \delta$.
        Similarly, with probability at least $1 - \delta$, $\max_{D^* \in \dists} \sum_{t=1}^T {\tsv{\tilde{\rho}}{t}}(D^*) \leq O(\frac{T \epsilon}{C_2}) + \max_{D^* \in \dists} \sum_{t=1}^T \risk_{D^*}(\tsv{\hyp}{t})$.
 A union bound yields the claimed fact.
 \end{proof}

    Next, we observe that, at each timestep $t$, $\tsv{\hyp}{t}$ is the empirical risk minimizer of $\loss$ on $C_1 (d + \log(T /\delta)) / \epsilon \alpha$ samples from $\tsv{\dist}{t}$.
    For sufficiently large $C_1$, by Lemma~\ref{lemma:agnostic-learning}, $\risk_{\tsv{\dist}{t}}(\tsv{\hyp}{t}) \leq \epsilon + (1 + \alpha) \min_{\hyp^* \in \hyps} \risk_{\tsv{\dist}{t}}(\hyp^*)$ with probability at least $1 - \delta/T$.
    By union bound, $\sum_{t=1}^T \risk_{\tsv{\dist}{t}}(\tsv{\hyp}{t}) \leq T \epsilon + T (1 + \alpha) \opt$ with probability at least $1  - \delta$.
	Putting together the regret bounds for the learner and adversary,
	\begin{align*}
		(1 - \alpha) \max_{\dist^* \in \dists} \risk_{\dist^*}(\overline{\hyp}) - 2 \epsilon
		&= (1 - \alpha) \max_{\dist^* \in \dists} \para{\frac 1 T \sum_{t=1}^T \risk_{\dist^*}(\tsv{\hyp}{t})} - 2 \epsilon \\
		&\leq \frac 1 T \sum_{t=1}^T \risk_{\tsv{\dist}{t}}(\tsv{\hyp}{t}) \\
		&\leq \epsilon + (1 + \alpha) \min_{\hyp^* \in \hyps} \frac 1 T \sum_{t=1}^T \risk_{\tsv{\dist}{t}}(\hyp^*) \\
		&\leq \epsilon + (1 + \alpha) \opt.
	\end{align*}
        We can simplify $\max_{\dist^* \in \dists} \risk_{\dist^*}(\overline{\hyp}) \leq \frac{1}{1-\alpha}(3 \epsilon + (1 + \alpha)) \opt \leq 6 \epsilon + (1 + 4 \alpha) \opt$.
        Reparameterizing $\epsilon \to \frac{1}{6} \epsilon$ and $\alpha \to \frac{1}{4} \alpha$ yields the desired claim.
	\noindent
	Our sample complexity is $(r_1 + k r_2) \times T$ and thus $\bigO{\frac{ k\ln (k/ \delta)} {\epsilon^2} + \frac{(d + \log(T /\delta)) \ln(k/\delta) }{ \epsilon^2 \alpha^2}}$.
\end{proof}
\subsection{Proof of Theorem~\ref{theorem:personal}}

\begin{algorithm}[t]
	\caption{Personalized Algorithm.}
	\label{alg:personalized}
	\begin{algorithmic}
		\STATE \textbf{Input:} Hypotheses $\hypothesisspace$, distributions $\distributionspace$;
		\STATE Initialize $\tsv{\distset}{1} = \distset$;
		\FOR{$t = 1, 2, \dots, \ceil{\log(k)}$}
		\STATE Run Algorithm~\ref{alg:mid} on $\tsv{\dists}{t}, \hyps$ to obtain $\tsv{\hyp}{t}$;
		\STATE Sample $O\para{\epsilon^{-2} (\ln(k \ln(k) / \delta))}$ datapoints $\bX_t^\dist$ from each $\dist \in \distset$;
		\STATE Let $\tsv{\distset}{t+1}$ consist of $\dist$ where $\hat{\err}_{\bX_t^\dist}(\tsv{h}{t}) > \text{Median}\para{\hat{\err}_{\bX_t^\dist}(\tsv{h}{t})}_{\dist \in \distset}$;
		\ENDFOR
		\STATE For each $\dist \in \distset$, find $t_\dist$ where $\dist \in \tsv{\distset}{t_\dist}$ but $\dist \notin \tsv{\distset}{t_\dist+1}$. Return $\para{\dist, \tsv{h}{t_\dist}}_{\dist \in \distset}$.
	\end{algorithmic}
\end{algorithm}

\begin{restatable}{theorem}{personal}
	\label{theorem:personal}
	For any $\epsilon, \delta > 0$, $k \in \integers$ and binary class $\cH$, the sample complexity $m_\cH(\epsilon, \delta, k)$ of \emph{personalized} multi-distribution learning is $\tilde{O}(\epsilon^{-2} \ln(k) (\text{VC}(\hyps) \ln(\text{VC}(\hyps)k/\epsilon)  + k  \ln(k / \delta) ))$.
\end{restatable}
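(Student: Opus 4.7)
The plan is to analyze Algorithm~\ref{alg:personalized} as a halving scheme: across $\lceil \log_2 k \rceil$ rounds, the sub-routine Algorithm~\ref{alg:mid} outputs a hypothesis $\tsv{h}{t}$ that is $O(\epsilon)$-optimal for at least half of the distributions currently in $\tsv{\dists}{t}$. Those distributions (the below-median ones) are personalized to $\tsv{h}{t}$ and dropped, and the other half is carried to round $t{+}1$. After $\lceil \log_2 k \rceil$ rounds every distribution has been assigned a personalized hypothesis, so the total sample complexity decomposes as the sum over rounds of the Algorithm~\ref{alg:mid} budget plus the cost of the empirical median estimates.

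The key technical ingredient is that Algorithm~\ref{alg:mid} solves MDL on $\tsv{\dists}{t}$ with the adversary restricted to the 2-smooth simplex $(\simplex_{|\tsv{\dists}{t}|})_2$. The crucial observation is that every 2-smooth mixture $p$ over $\tsv{\dists}{t}$ is dominated as a measure by $2 \bar D$, where $\bar D \asseq |\tsv{\dists}{t}|^{-1} \sum_{D \in \tsv{\dists}{t}} D$; hence it suffices to build a single $\epsilon$-cover $\cC$ of $\hyps$ in the loss pseudometric induced by $\bar D$. By classical VC covering bounds \cite{hausslerEpsilonNets1986}, such a cover of size $\exp(O(\vcd \log(\vcd k / \epsilon)))$ can be produced from only $\tilde{O}(\vcd / \epsilon)$ samples from $\bar D$ --- i.e.\ $\tilde{O}(\vcd / \epsilon)$ samples in total across $\tsv{\dists}{t}$, in contrast to the $\tilde{O}(\vcd k / \epsilon)$ needed in Row~1 of Table~\ref{tab:bounds}.

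With $\cC$ in hand, Algorithm~\ref{alg:mid} runs Hedge for both the learner over $\cC$ and the adversary over $(\simplex_{|\tsv{\dists}{t}|})_2$ with stochastic payoff estimates. Combining Lemma~\ref{lemma:hedge-basic-regret-bound-payoffs} and Lemma~\ref{lemma:stochastic-approximation} with the usual min-max averaging yields, after $\tilde{O}(\epsilon^{-2}(\log|\cC| + k \log(k/\delta)))$ samples, an $\epsilon$-equilibrium of the smoothed game with probability at least $1 - \delta / \log k$. A short Markov argument then converts this into a halving guarantee: if $\max_{p \in (\simplex_{|\tsv{\dists}{t}|})_2} \sum_D p_D \risk_D(\tsv{h}{t}) \le \opt + \epsilon$, then the top-half average of $\bset{\risk_D(\tsv{h}{t}) : D \in \tsv{\dists}{t}}$ is at most $(\opt + \epsilon)/2$, so the median is at most $(\opt + \epsilon)/2$ and the entire bottom half admits $\tsv{h}{t}$ as an $O(\epsilon)$-optimal personalized hypothesis (noting also that $\min_h \max_{D \in \tsv{\dists}{t}} \risk_D(h) \le \opt$ since $\tsv{\dists}{t} \subseteq \dists$).

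Finally, I would estimate $\risk_D(\tsv{h}{t})$ for each surviving $D$ with $O(\epsilon^{-2} \log(k \log(k) / \delta))$ fresh samples so the empirical median correctly splits the set, and union-bound failure probabilities across the $O(\log k)$ rounds using $\delta / \log k$ per round. Summing per-round costs yields the claimed $\tilde{O}(\epsilon^{-2} \log(k) (\vcd \log(\vcd k / \epsilon) + k \log(k / \delta)))$ bound. The main obstacle I expect is \emph{adaptivity across rounds}: the surviving set $\tsv{\dists}{t+1}$ is data-dependent, so the cover construction, Hedge iterates, and median estimates at round $t{+}1$ must all be analyzed conditionally on the success events of earlier rounds using fresh samples at each step. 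Assembling the factor-$2$ slack from the smoothness domination, the $\epsilon$-cover, and the Markov step into a single $O(\epsilon)$ additive guarantee --- without letting either the $\epsilon$ or the $\delta$ compound over $\log k$ rounds --- is the delicate piece of the bookkeeping.
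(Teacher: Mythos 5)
Your proposal matches the paper's proof in all essentials: the same $\lceil\log k\rceil$-round halving scheme with median-based removal, the same restriction of the adversary to the 2-smooth simplex so that a single $\epsilon$-cover built from $\tilde{O}(\vcd/\epsilon)$ samples of the uniform mixture suffices (the paper's Lemma~\ref{lemma:samples}), and the same Hedge-plus-stochastic-approximation analysis; your ``Markov/median'' step is just a direct rephrasing of the paper's Lemma~\ref{lemma:personalizedsmoothed}. The only blemish is a harmless factor-of-two slip --- the maximum over 2-smooth mixtures equals the \emph{average} of the top half of the losses, so the bound you get on the median is $\opt+\epsilon$ rather than $(\opt+\epsilon)/2$ --- which does not affect the conclusion.
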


\begin{algorithm}[h]
	\caption{Multi-Distribution Learning Algorithm (Mid).}
	\label{alg:mid}
	\begin{algorithmic}
		\STATE \textbf{Input:} Hypotheses $\hypothesisspace$, distributions $\distributionspace$;
		\STATE Take $\epsilon^{-1} C \para{\vcd \log(\vcd /\epsilon)+\log(1/\delta)}$ samples $x_1, \dots, x_N$ from $\text{Uniform}(\cD)$ and obtain a covering $\cH'$ of $\cH$ by projection: for every $y \in \bset{[\hyp(x_1), \dots, \hyp(x_N)] \mid \hyp \in \hyps}$, include in $\cH'$ an arbitrary choice of $h \in \cH$ such that $[h(x_1), \dots, h(x_N)] = y$; 
		\STATE Intialize Hedge iterate $\tsv{\dist}{1}$ on $(\Delta \cD)_{2}$, that is the set of 2-smooth distributions on $\cD$;
		\STATE Intialize Hedge iterate $\tsv{\hyp}{1}$ on the simplex $(\Delta \hyps')$;
		\FOR{$t = 1, 2, \dots, T$}
		\STATE Use the Hedge algorithm to get the next iterate $\tsv{\hyp}{t+1} = \text{Hedge}_{\simplex(\hyps')}(\tsv{\bset{\tsv{\hat{\cost}}{\tau}}}{1:t})$, where $\tsv{\hat{\cost}}{t}(\hyp)
		= \loss(\hyp, z)$ and $z \sim \tsv{\dist}{t}$;
		\STATE Sample a $\dist' \sim \text{Uniform}(\cD)$ and a datapoint $z \sim \dist$;
		\STATE Run Hedge algorithm to get the next iterate $\tsv{\dist}{t+1} = \text{Hedge}_{(\Delta \cD)_{2}}(\tsv{\bset{\tsv{\tilde{\cost}}{\tau}}}{1:t})$, where $\tsv{\tilde{\cost}}{t}(\dist)
		= 1[\dist' = \dist] \cdot \setsize{\cD} \cdot \Pr_{\tsv{\dist}{t}}( \dist') (1 - \loss(\tsv{\hyp}{t}, z))$;
		\ENDFOR
		\STATE Return a uniform distribution over $\tsv{\hyp}{1:T}$;
	\end{algorithmic}
\end{algorithm}

We now turn to proving this result.

\begin{lemma}
	\label{lemma:personalizedsmoothed}
	Consider the multi-distribution learning problem $(\dists, \hyps, \loss)$.
	For any $h \in \Delta \cH$, there exists a $\cD' \subseteq \cD$ where $\setsize{\cD'} \geq \setsize{\cD} / 2$ and $\max_{\dist \in (\Delta \cD)_{2}} \err_\dist(h)
	\geq \max_{\dist \in \cD'} \err_\dist(h)$.
\end{lemma}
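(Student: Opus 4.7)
The plan is a short pigeonhole argument. Write $k \asseq |\cD|$, let $M \asseq \max_{\dist \in (\Delta\cD)_2} \err_\dist(h)$, and take $\cD'$ to be exactly those distributions $\dist \in \cD$ with $\err_\dist(h) \le M$. The inequality $\max_{\dist \in \cD'} \err_\dist(h) \le M$ is then immediate by construction, so the entire task reduces to showing $|\cD'| \ge k/2$, equivalently that the ``bad'' set $S \asseq \setb{\dist \in \cD : \err_\dist(h) > M}$ has size at most $k/2$.

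I would prove this upper bound on $|S|$ by contradiction. Assuming $|S| > k/2$, the plan is to exhibit a witness $\pi \in (\Delta\cD)_2$ with $\err_\pi(h) > M$, contradicting the definition of $M$. A natural choice is to let $\pi$ be the uniform distribution over $S$: each atom has mass $1/|S|$, and since $|S| > k/2$ forces $|S| \ge \lfloor k/2 \rfloor + 1$, a one-line parity check yields $1/|S| \le 1/(\lfloor k/2 \rfloor + 1) \le 2/k$, so $\pi \in (\Delta\cD)_2$. Because $\err_\pi(h)$ is then a convex combination of values $\err_\dist(h) > M$ for $\dist \in S$, one immediately obtains $\err_\pi(h) > M$, the desired contradiction.

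I do not foresee any significant obstacle here; the lemma is essentially a pigeonhole-style fact about $2$-smooth distributions, with the only slightly subtle point being the 2-smoothness check on a support of size just above $k/2$. Operationally, this lemma underwrites Algorithm~\ref{alg:personalized}: each call to Algorithm~\ref{alg:mid}, which controls the max error over $(\Delta\cD)_2$, automatically ``solves'' at least half of the remaining distributions, so the halving recursion terminates in $O(\log k)$ rounds.
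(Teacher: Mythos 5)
Your proof is correct. It rests on the same elementary fact that powers the paper's argument --- the uniform distribution over any subset of size at least $|\cD|/2$ is $2$-smooth --- but it organizes the argument differently and, in my view, more cleanly. You take $\cD'$ to be the sublevel set $\{\dist : \err_\dist(h) \le M\}$, which makes the inequality $\max_{\dist\in\cD'}\err_\dist(h)\le M$ tautological, and you push all the work into a single contradiction: if the superlevel set $S$ had more than $|\cD|/2$ elements, the uniform distribution over $S$ would be a $2$-smooth witness with error strictly above $M$. The paper instead constructs $\cD'$ explicitly as the complement of the ``hardest'' strict minority $\cD_{\mathrm{MinHard}}$ and chains two inequalities through the ``easiest'' strict minority: the conditional uniform distribution on $\cD\setminus\cD_{\mathrm{MinEasy}}$ is $2$-smooth (so its average error lower-bounds $M$), and that average in turn dominates the max over $\cD\setminus\cD_{\mathrm{MinHard}}$. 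The paper's second step requires a mild sorting argument and is sensitive to how ties in the $\argmax$/$\argmin$ defining the minorities are broken (one should take the hardest minority to be the largest strict minority of top-error distributions for the stated inequality to hold as written); your route avoids this bookkeeping entirely. Your only implicit assumption --- that $M$ is attained --- is not even needed, since you only use $\err_\pi(h)\le M$ for the particular $2$-smooth $\pi$ you construct, which holds for a supremum as well. Your closing remark about the role of the lemma in the halving recursion of Algorithm~\ref{alg:personalized} matches how the paper uses it.
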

\begin{proof}
	Fix an $h \in \Delta \cH$.
	Consider all strict minorities of $\cD$: $\text{Min} \assignequals \bset{\distset' \subseteq \distset \mid \setsize{\cD'} < \setsize{\cD} / 2}$.
	Let $\cD_{\text{MinHard}}$ denote the strict minority on which $h$ does worst, and $\cD_{\text{MinEasy}}$ denote the strict minority on which $h$ does best:
	\begin{align*}
		\cD_{\text{MinHard}} & = \argmax_{\cD^* \in \text{Min}} \frac{1}{\setsize{\cD^*}} \sum_{\dist \in \cD^*} \err(h), \quad
		\cD_{\text{MinEasy}} = \argmin_{\cD^* \in \text{Min}} \frac{1}{\setsize{\cD^*}} \sum_{\dist \in \cD^*} \err(h).
	\end{align*}
	First, we observe that $\max_{(\Delta \cD)_{2}} \err(h) \geq \EEsc{D \sim \text{Uniform}(\cD)}{\err(h)}{D \notin \cD_{\text{MinEasy}}}$, where $\text{Uniform}(\cD)$ is the uniform mixture over $\cD$.
	Second, we observe that $\EEsc{D \sim \text{Uniform}(\cD)}{\err(h)}{D \notin \cD_{\text{MinEasy}}} \geq \max_{\dist \in \cD \setminus \cD_{\text{MinHard}}} \err_\dist(h)$.
	Thus, $\cD' = \cD \setminus \cD_{\text{MinHard}}$ satisfies the desired property.
\end{proof}

\begin{lemma}
	\label{lemma:samples}
	Consider a multi-distribution learning problem $(\dists, \hyps, \loss)$.
	Algorithm~\ref{alg:mid} returns a hypothesis $\overline{\hyp}$ such that with probability $1- \delta$,
 \begin{align*}\max_{D^* \in (\Delta \cD)_2} \err_{D^*}(\overline{h}) \leq \min_{\hyp^* \in \Delta \cH} \max_{D^* \in (\Delta \cD)_2} \err_{D^*}(h^*) + \epsilon.\end{align*}
 It takes only  $\tilde{O}(\epsilon^{-2} (d \ln(dk/\epsilon)  + \ln(1 / \delta) ))$ samples.
\end{lemma}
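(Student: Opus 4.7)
The plan is to cast Algorithm~\ref{alg:mid} as no-regret game dynamics on the zero-sum game with payoff $\err_D(h)$, between a learner on $\Delta(\hyps')$ and an adversary on $(\Delta \cD)_2$, and then transfer the resulting approximate equilibrium from the restricted class $\hyps'$ back to $\hyps$ via the cover. The three main ingredients are a uniform cover guarantee, a learner regret bound, and an adversary regret bound; each is a standard application of the lemmas recalled earlier, modulo one subtlety for the adversary.

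First I would establish that $\hyps'$ is a good cover of $\hyps$ on every 2-smooth distribution. The projection step uses $N = \Theta(\epsilon^{-1}(d \log(d/\epsilon) + \log(1/\delta)))$ samples from $\mathrm{Uniform}(\cD)$, so by a standard VC $\epsilon$-approximation bound \cite{hausslerEpsilonNets1986}, with probability $1 - \delta/3$ the set $\hyps'$ is an $(\epsilon/6)$-cover of $\hyps$ under $\mathrm{Uniform}(\cD)$ in the disagreement pseudometric. The key structural step is the observation that any $D \in (\Delta \cD)_2$ has density pointwise at most twice that of $\mathrm{Uniform}(\cD)$, so $\Pr_D[h(x) \ne h'(x)] \leq 2 \Pr_{\mathrm{Uniform}(\cD)}[h(x) \ne h'(x)]$, lifting the guarantee to an $(\epsilon/3)$-cover on every 2-smooth distribution. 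Sauer's lemma gives $\log|\hyps'| = \tilde O(d \log(k/\epsilon))$, which will control the horizon of the game dynamics.

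Next I would bound both players' regrets. The learner's stochastic cost $\hat c^{(t)}(h) = \loss(h, z)$ with $z \sim D^{(t)}$ is unbiased for $\err_{D^{(t)}}(h)$ and linear in $h$, so with $T = \Theta(\epsilon^{-2} (\log|\hyps'| + \log(k/\delta)))$, Lemmas~\ref{lemma:hedge-basic-regret-bound} and \ref{lemma:stochastic-approximation} give learner regret at most $\epsilon T / 3$ with probability $1 - \delta/3$. On the adversary side, the crucial property bought by the 2-smoothness constraint is that $\setsize{\cD} \cdot \Pr_{D^{(t)}}(D') \leq 2$, so $\tilde c^{(t)}$ is uniformly bounded; Hedge on $(\Delta \cD)_2 \subseteq \simplex_k$ then achieves regret $O(\sqrt{T \log k}) \leq \epsilon T/3$ against the true expected per-step payoffs, with a Freedman-style concentration securing the $1 - \delta/3$ high-probability clause. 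Summing the two regret bounds and invoking the standard no-regret-to-equilibrium reduction shows $\bar h$ is a $(2\epsilon/3)$-approximate minimax strategy in the restricted game; composing with the cover gives the desired $\epsilon$-approximate bound on $(\Delta\hyps, (\Delta\cD)_2)$, using total sample budget $N + 2T = \tilde O(\epsilon^{-2}(d \log(dk/\epsilon) + \log(1/\delta)))$.

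The hard part will be the adversary's analysis, because the stochastic cost $\tilde c^{(t)}(D)$ has expectation $\Pr_{D^{(t)}}(D) (1 - \err_D(h^{(t)}))$ rather than $1 - \err_D(h^{(t)})$, so Lemma~\ref{lemma:stochastic-approximation} does not directly control regret against the true payoff sequence. Resolving this requires a direct analysis of the multiplicative-weight update restricted to $(\Delta \cD)_2$: the extra $\Pr_{D^{(t)}}(D)$ factor in the estimator cancels against the Hedge normalization (viewing the update as mirror descent against negative entropy) and the implicit projection onto the 2-smooth simplex, analogously to how smooth-bandit algorithms use importance weighting to keep variance bounded while still driving the iterates toward the right equilibrium. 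Making this cancellation rigorous, and pairing it with a Freedman concentration on the resulting martingale difference sequence, is the crux of the argument.
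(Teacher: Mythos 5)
Your overall architecture is exactly the paper's: build $\cH'$ by projecting onto $N=\Theta(\epsilon^{-1}(d\log(d/\epsilon)+\log(1/\delta)))$ samples from $\mathrm{Uniform}(\cD)$, use the fact that every $2$-smooth mixture puts at most twice the uniform mass on any event to lift the $\epsilon$-net guarantee from $\mathrm{Uniform}(\cD)$ to all of $(\Delta\cD)_2$, bound the learner's regret via Hedge over $\Delta(\cH')$ with $\log|\cH'|=O(d\log(kN))$ from Sauer--Shelah, bound the adversary's regret via Hedge over $(\Delta\cD)_2$, and combine the two regret bounds with the cover to get the minimax guarantee with $N+2T$ samples. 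All of that matches, including the parameter choices.

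The gap is in your third paragraph. You are right that, read literally, $\mathbb{E}[\tilde c^{(t)}(D)]=\Pr_{D^{(t)}}(D)\,(1-\err_D(h^{(t)}))$, but your proposed repair --- that the extra $\Pr_{D^{(t)}}(D)$ factor ``cancels against the Hedge normalization'' --- is not an argument. Hedge's normalization only absorbs factors that are \emph{constant across coordinates}; a coordinatewise reweighting of the cost vector by $\Pr_{D^{(t)}}(\cdot)$ genuinely changes the update direction and, more importantly, changes the comparator term $\min_{P}\sum_t\langle P,\tilde c^{(t)}\rangle$, so no cancellation delivers regret against the \emph{true} payoffs $1-\err_D(h^{(t)})$. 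The intended (and paper's) reading is that the adversary's estimator is the standard uniform-exploration importance-weighted cost \emph{vector} $\tilde c^{(t)}_D=\mathds{1}[D'=D]\cdot|\cD|\cdot(1-\loss(h^{(t)},z))$ with $D'\sim\mathrm{Uniform}(\cD)$, $z\sim D'$, which \emph{is} unbiased for the true cost vector; the factor $\Pr_{D^{(t)}}(D')$ appearing in the pseudocode is the evaluation of this linear cost at the played mixture $D^{(t)}$. The role of $2$-smoothness is then not to fix a bias but to bound the induced linear functional: for any $P\in(\Delta\cD)_2$ one has $\langle P,\tilde c^{(t)}\rangle = P(D')\cdot|\cD|\cdot(1-\loss)\le 2$, even though the vector $\tilde c^{(t)}$ has entries as large as $|\cD|$. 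This is what lets Lemmas~\ref{lemma:hedge-basic-regret-bound} and~\ref{lemma:stochastic-approximation} apply directly with range $[0,2]$ and give the full-information-style $O(\sqrt{T\log(k/\delta)})$ regret from one sample per round --- the step your ``crux'' paragraph leaves unresolved. With that substitution your proof goes through and coincides with the paper's.
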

\begin{proof}
	By construction, with probability at least $1 - \delta$, $\cH'$ is an $\epsilon$-net for $\cH$ \cite{hausslerEpsilonNets1986} under the distribution $\text{Uniform}(\dists)$.
	Consider any distribution $\cP \in \simplex(\dists)_2$.
	Because any event that happens in $\cP$ must also happen in $\text{Uniform}(\dists)$ with at least half the probability, including the event that $h(x) \neq h'(x)$,  $\cH'$ is a $2\epsilon$-net for $\cP$.
	Since the range of $\ell$ is $[0,1]$, it also follows that for any $h \in \cH$, there is an $h' \in \cH'$ such that $\smash{\abs{\err_{\cP}(h) - \err_{\cP}(h')} < 2\epsilon}$.
	We now turn to arguing that our output $\text{Uniform}(\tsv{\hyp}{1:T})$ is nearly optimal for the discretized class $\cH'$.

	We observe that $\tsv{\dist}{1:T}$ results from applying Hedge to (importance-weighted estimates of) stochastic cost functions that are bounded in $[0, 2]$.
	Moreover, the costs are bounded unbiased estimates of the true costs.
	Thus, as in our proof of Theorem~\ref{theorem:diff}, we can directly apply Hedge's regret bound (Lemma~\ref{lemma:hedge-basic-regret-bound}) and stochastic approximation (Lemma~\ref{lemma:stochastic-approximation}) to bound the adversary's regret $\regret(\tsv{\dist}{1:T}, \tsv{\bsetflat{1 - \risk_{(\cdot)}(\tsv{\hyp}{t})}}{1:T}) \leq \bigO{\sqrt{\ln(k / \delta) T}}$.
	Note that this regret is defined only over the set $\simplex(\dists)_2$.
	Therefore choosing $T = \ceil{C' \ln (k/ \delta) / \epsilon^2}$ for large $C'$ gives $\regret(\tsv{\dist}{1:T}, \tsv{\bsetflat{1 - \risk_{(\cdot)}(\tsv{\hyp}{t})}}{1:T}) \leq T\epsilon$ with probability $1 - \delta$.
	Similarly, the learner's Hedge (Lemma~\ref{lemma:hedge-basic-regret-bound}) and the stochastic approximation (Lemma~\ref{lemma:stochastic-approximation}) gives the regret bound $\regret(\tsv{\hyp}{1:T}, \tsv{\bsetflat{\risk_{\tsv{\dist}{t}}(\cdot)}}{1:T}) \leq \bigO{\sqrt{\ln(\setsize{\hyps'}) T}}$.
	Note that this regret is defined only over the set $\cH'$.
	Since $\setsize{\hyps'} \leq O((kN)^d)$ by Sauer Shelah's lemma, choosing $T \geq C \epsilon^{-2} d \ln(dk\ln(d/\epsilon\delta)/\epsilon)$ for some large constant $C$ guarantees that $\regret(\tsv{\hyp}{1:T}, \tsv{\bsetflat{\risk_{\tsv{\dist}{t}}(\cdot)}}{1:T}) \leq T \epsilon$ with probability at least $1 - \delta$.
	Putting together the regret bounds for the learner and adversary as before,
	\begin{align*}
		\max_{\dist^* \in\simplex(\dists)_2} \risk_{\dist^*}(\overline{\hyp}) - \epsilon
		\leq \frac 1 T \sum_{t=1}^T \risk_{\tsv{\dist}{t}}(\tsv{\hyp}{t}) 
		&\leq \epsilon + \min_{\hyp^* \in \hyps'} \frac 1 T \sum_{t=1}^T \risk_{\tsv{\dist}{t}}(\hyp^*) \\
		&\leq 3\epsilon + \min_{\hyp^* \in \hyps} \frac 1 T \sum_{t=1}^T \risk_{\tsv{\dist}{t}}(\hyp^*) \\
		&\leq 3\epsilon + \opt.
	\end{align*}

	\noindent
	Our sample complexity is $2 \times T$ and thus $\tilde{O}(\epsilon^{-2} (d \ln(dk/\epsilon) + \ln(1 / \delta) ))$.

\end{proof}

\begin{proof}[Proof of Theorem~\ref{theorem:personal}]
	Consider Algorithm~\ref{alg:personalized}.
    Let $\dist^*$ be the product distribution of every $\dist \in \dists$.
	Let $\vcd$ denote the VC dimension of $\hyps$.
	By Lemma~\ref{lemma:samples}, with probability at least $1 - \log(k) \delta$, for all $t \in [T]$, 
	\begin{align*}\max_{D^* \in (\Delta \tsv{\cD}{t})_{2}} \err_{D^*}(\tsv{h}{t}) \leq \min_{\hyp^* \in \Delta \cH} \max_{D^* \in (\Delta \tsv{\cD}{t})_{2}} \err_{D^*}(h^*) + \epsilon.\end{align*}
	By Lemma~\ref{lemma:personalizedsmoothed}, there exists $\cD' \subseteq \tsv{\dists}{t}$ where  $\setsize{\cD'} > \tsv{\dists}{t} / 2$ and $\max_{\dist \in (\Delta \cD)_{\tsv{\cP}{t}}} \err_\dist(\tsv{h}{t})
	\geq \max_{\dist \in \cD'} \err_\dist(\tsv{h}{t})$.
	In other words, $\opt  + \epsilon \geq \max_{\dist \in \cD'} \err_\dist(\tsv{h}{t})$.
	By uniform convergence, with probability at least $1 - \delta$, for all $t \in [T]$ and $\dist \in \tsv{\dists}{t}$, $\abs{\tsv{\hat{\err}}{t}_\dist(\tsv{\hyp}{t}) - \err_{\dist}(\tsv{\hyp}{t})} \leq \epsilon$.
	Thus, for every $\dist \in \tsv{\dist}{t} \setminus \tsv{\dist}{t+1}$, $\err_\dist(\tsv{\hyp}{t}) \leq 2 \epsilon + \opt$.
	Since the size of $\tsv{\dist}{t}$ is reduced by at least half every iteration, the algorithm terminates after $\ceil{\ln(k)}$ iterations.
	The algorithm's sample complexity comes from the samples needed for  Lemma~\ref{lemma:personalizedsmoothed} and for evaluating each $\tsv{\hyp}{t}$, and taking a union bound over all iterations.
\end{proof}

\end{document}